\documentclass{article}

\usepackage[accepted]{eiml_icml2026}
\usepackage[utf8]{inputenc}
\usepackage{microtype}
\usepackage{graphicx}
\usepackage{hyperref}
\usepackage{subcaption}
\usepackage{placeins}
\usepackage{amsfonts}  % For math blackboard fonts% Attempt to make hyperref and algorithmic work together better:
\usepackage{amsmath,amssymb,amsthm}
\usepackage{mathtools}
\newtheorem{theorem}{Theorem}
\usepackage{stfloats} % Add this to your preamble
\usepackage{multirow}
\usepackage{booktabs}
\newtheorem{proposition}{Proposition}
\DeclareMathOperator{\Var}{Var}

\DeclareMathOperator{\E}{E}

\icmltitlerunning{Exposure Bias as Epistemic Uncertainty in Recursive Forecasting}

\begin{document}

\twocolumn[
% \icmltitle{Exposure Bias as Epistemic Uncertainty in Recursive Forecasting}
\icmltitle{Exposure Bias as Epistemic Underidentification in Recursive Forecasting}

% List of authors (only shown if [accepted] option is used)
\begin{icmlauthorlist}
\icmlauthor{Riku Green}{univ}
\icmlauthor{Zahraa S. Abdallah}{univ}
\icmlauthor{Telmo M Silva Filho}{univ}

\end{icmlauthorlist}

\icmlaffiliation{univ}{University of Bristol, UK}
% \icmlaffiliation{equal}{Equal contribution}

\icmlcorrespondingauthor{Riku Green}{riku.green@bristol.ac.uk}

% Keywords for PDF metadata
\icmlkeywords{Machine Learning, Epistemic Intelligence, ICML}

\vskip 0.3in
]

% This prints the affiliations and the "Under Review" notice
\printAffiliationsAndNotice{}

\begin{abstract}
Recursive multi-step forecasting is usually framed as distribution shift: models are trained on observed histories but deployed on their own predictions. We show this framing is incomplete by proving that, under partial observability or state truncation, recursive rollout is also an epistemic underidentification problem. Even with deterministic latent dynamics, one-step Bayes supervision identifies behavior only on observed contexts and need not identify the deployed recursive predictor once rollout queries self-generated induced states whose correct local targets are not determined by numeric state alone. We formalize this with induced states $Z$ and provenance variables $P$, and derive a decomposition of induced-state error into teacher-forcing/rollout mismatch, representation--class approximation, and provenance information gaps. 
Empirically, we show that rollout enters a distinct induced-state regime, that fixed induced states define a distinct local corrective task, and that closed-loop gains arise not only from local adaptation but also from changing the induced states visited during rollout. Using a simple binary provenance encoding, provenance-aware correction can further improve performance, though gains are conditional rather than uniform.
These results recast exposure bias as reasoning under self-induced epistemic uncertainty.
\end{abstract}

\section{Introduction}

Autoregressive sequence prediction underpins applications from language generation \cite{radford2019language_gpt2} to dynamical forecasting \cite{green2025stratify}. 
But models are trained on observed histories and later act on states generated by their own predictions which can induce compounding error through exposure bias \cite{huszar2015not}. 
In this paper, we study this problem in recursive multi-step forecasting with numerical targets. Recursive rollout is simple and scalable, but early errors perturb later inputs and can compound across the horizon \cite{taieb2015bias}.

Exposure bias is usually framed as this train--test mismatch: teacher forcing supervises the model on observed contexts, while deployment proceeds on self-generated ones \cite{bengio2015scheduled}. This view motivates DAgger-style aggregation, Professor Forcing, and forecasting-specific mixed-regime training \cite{ross2011reduction,lamb2016professor,sangiorgio2020robustness}. But it leaves a more basic question open: once rollout begins, what prediction problem is the model actually solving?

We argue that under partial observability, noise, or state truncation, recursive forecasting is not only a distribution-shift problem. It can also be one of \emph{epistemic underidentification}. One-step supervision constrains behavior only on observed contexts, while rollout queries the model on induced states whose correct local targets may not be determined by the represented state alone. Recursive failure then reflects not only unfamiliar inputs, but also missing information in the state given to the predictor. This links exposure bias to epistemic uncertainty from representation insufficiency rather than irreducible noise \cite{hullermeier2021aleatoric,kendall2017uncertainties}.

This view also changes how correction should be interpreted. Once training includes rollout-induced states, the learner need not face one homogeneous prediction problem. The local target may depend not only on the induced state, but also on its \emph{provenance}, such as rollout depth or which coordinates are observed rather than model-generated. This is consistent with predictive-state and information-state views of partially observed dynamical systems, where state summaries must preserve the information needed for future prediction and control \cite{littman2001predictive,singh2012predictive,subramanian2022approximate}.

Our goal is not to propose a new correction method, but to clarify the mechanism behind recursive forecasting failure. 
We ask when: rollout creates a distinct induced-state regime, prediction on fixed induced states differs from the original observed-state problem, and when correction helps by changing the states visited in closed loop.
We also test whether a simple provenance mask helps in these fixed-state and closed-loop settings. 
Recursive forecasting thus provides a concrete testbed for epistemic uncertainty in sequential learning systems. Although our theory and experiments focus on numerical forecasting, the underlying concern is broader: autoregressive systems often act on self-generated histories that may be under-specified by their state representation.
Our contributions are:

\begin{itemize}
    \item We show that exposure bias is not only distribution shift: under partial observability or state truncation, the one-step Bayes objective need not identify recursive rollout itself, even among recursive predictors.

    \item We formalize recursive forecasting using induced states and provenance, show how provenance can resolve clashes between observed and induced corrective targets, and derive a decomposition into teacher-forcing/rollout mismatch, representation--class approximation, and provenance information gaps.

    \item We provide empirical evidence that rollout enters a distinct induced-state regime and that fixed induced states define a distinct local corrective task, with heterogeneous gains from frozen-state relearning across datasets and horizons.

    \item We show that closed-loop correction can help not only by improving local prediction on fixed induced states, but also by changing the induced-state regime encountered during rollout; under the present provenance encoding, such gains are conditional rather than uniform.
\end{itemize}

\section{Exposure Bias as Epistemic Underidentification}
\label{sec:theory}

Exposure bias in sequential prediction is usually framed as covariate shift:
models are trained under teacher forcing on observed prefixes but deployed on
their own predictions, so early errors perturb future inputs and compound over
rollout \cite{williams1989learning,bengio2015scheduled}.
This has motivated training on learner-induced states through scheduled
sampling \cite{bengio2015scheduled}, DAgger-style aggregation
\cite{ross2011reduction}, adversarial alignment of train and rollout dynamics
\cite{lamb2016professor}, sequence-level objectives
\cite{ranzato2015sequence}, and forecasting-specific free-running or mixed-regime
curricula \cite{venkatraman2015improving,sangiorgio2020robustness,
sangiorgio2021forecasting,teutsch2022flipped,vlachas2024learning}. Classical
analyzes also show that one-step optimal recursive forecasting need not
coincide with multi-step optimal forecasting, often by contrasting recursive
and direct strategies or by exploiting noise and horizon-specific target
mismatch \cite{taieb2015bias,green2025stratify, green2024time,yoon2022robust}. Our claim is
earlier and different: under partial observability or state truncation, the
one-step Bayes objective may fail to \emph{identify recursive rollout even
among recursive predictors themselves}. Thus the issue is not only mismatch
between one-step and multi-step objectives, but underidentification of the
deployed recursive map induced by an insufficient state representation.

This perspective is closer to epistemic uncertainty from missing knowledge or
insufficient representation than to irreducible noise
\cite{hullermeier2021aleatoric,kendall2017uncertainties}. It is also aligned
with predictive-state and information-state views of partially observed
dynamical systems, where a valid state summary must preserve the information
needed for future prediction and control
\cite{littman2001predictive,singh2012predictive,subramanian2022approximate}.
Recent work has studied multi-step forecasting through epistemic
bias--variance-style decompositions and Jacobian amplification of predictor
uncertainty \cite{green2025epistemic}, but our question is different: what
prediction problem does recursive rollout itself create? We argue that
rollout can produce induced states whose correct local target is not
identified by numeric state alone, so corrective methods such as scheduled
sampling \cite{bengio2015scheduled} may encounter \emph{target clashes} unless
the state is augmented with information about \emph{how} it was formed. We
call this additional information \emph{provenance}. Proofs and expanded
derivations are deferred to Appendix~\ref{app:ext_theory}.

\begin{figure*}[t]
    \centering
    \begin{subfigure}[t]{0.42\textwidth}
        \centering
        \includegraphics[width=\linewidth]{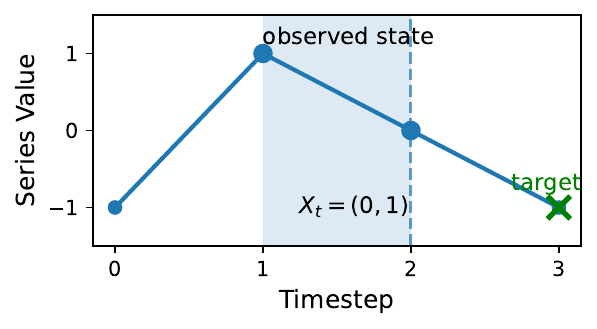}
        \caption{Observed state $(0,1)$ with target $-1$.}
    \end{subfigure}\hfill
    \begin{subfigure}[t]{0.42\textwidth}
        \centering
        \includegraphics[width=\linewidth]{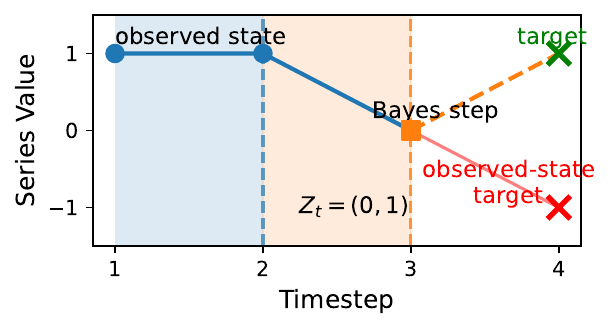}
        \caption{Bayes-induced rollout state $(0,1)$ with local target $+1$.}
    \end{subfigure}
    \caption{\textbf{Provenance can resolve a clash between observed and
    corrective targets.} The same numeric state $(0,1)$ occurs both on the
    observed support and as a self-generated rollout state under the
    Bayes-optimal one-step predictor. These two occurrences require different
    next-step targets. A predictor that uses only the numeric state must
    conflate them; augmenting the input with a binary observed/generated
    provenance tag separates the observed regime from the induced regime and
    resolves the clash. Full details are given in
    Appendix~\ref{app:toy-provenance}.}
    \label{fig:toy_prov}
\end{figure*}

\paragraph{Setup.}
Let $X_t$ denote the observed context, let $Y_{t+1}$ denote the next target,
and write
\[
  M := \operatorname{supp}\!\bigl(\mathcal{L}(X_t)\bigr)
\]
for the support of the observed-context distribution. For a measurable
predictor $g$, define the closed-loop update
\[
  T_g(x_1,\ldots,x_{\hat p})
  :=
  \bigl(g(x_1,\ldots,x_{\hat p}),x_1,\ldots,x_{\hat p-1}\bigr),
\]
and the two-step recursive forecast
\[
  \Phi_g(x) := g\!\bigl(T_g(x)\bigr).
\]
Under partial observability or state truncation, the represented state $X_t$
need not determine the next target even when the latent dynamics are
deterministic. We therefore ask whether the one-step Bayes objective identifies
the recursive predictor used at deployment.

\begin{theorem}[One-step Bayes optimality underidentifies recursive rollout]
\label{thm:one-step-optimality-recursive-rollout}
Let $g^\star$ be Bayes-optimal for one-step prediction under squared loss, and
suppose there exists $x\in M$ such that
\[
  T_{g^\star}(x)\notin M.
\]
Then the one-step Bayes objective identifies the predictor only on $M$, not on
rollout-induced states outside $M$. Consequently, there exist measurable
predictors $g_1$ and $g_2$ such that
\[
  g_1(X_t)=g_2(X_t)=g^\star(X_t)
  \qquad \text{a.s.},
\]
but
\[
  \Phi_{g_1}(x)\neq \Phi_{g_2}(x).
\]
Hence two recursive predictors can attain identical one-step Bayes risk while
inducing different recursive multi-step forecasts.
\end{theorem}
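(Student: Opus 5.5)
The argument is a direct construction: define $g_1$ and $g_2$ to agree with $g^\star$ on $M$ and differ only at the single induced point $z:=T_{g^\star}(x)\notin M$. The first step is to recall why the one-step Bayes objective pins down a predictor only $\mathcal{L}(X_t)$-almost surely. Under squared loss the risk of any measurable $g$ with $\E[Y_{t+1}^2]<\infty$ decomposes as
\[
  \E\bigl[(Y_{t+1}-g(X_t))^2\bigr]=\E\bigl[(Y_{t+1}-g^\star(X_t))^2\bigr]+\E\bigl[(g^\star(X_t)-g(X_t))^2\bigr],
\]
with $g^\star(X_t)=\E[Y_{t+1}\mid X_t]$. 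Hence $g$ is Bayes-optimal iff $g=g^\star$ $\mathcal{L}(X_t)$-a.s., and any modification of $g^\star$ on a $\mathcal{L}(X_t)$-null set stays Bayes-optimal. This gives the first claim of the theorem: the objective constrains $g$ only on $M$, up to null sets.

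Next I would fix $x\in M$ with $z=T_{g^\star}(x)\notin M$. Since $M$ is the support, hence closed, the singleton $\{z\}$ lies in the open complement $M^c$, so $\Pr(X_t=z)=0$. I would set $g_1:=g^\star$ and $g_2:=g^\star+c\,\mathbf{1}_{\{z\}}$ for some $c\neq 0$. Both maps are measurable, and $g_2=g^\star$ off a null set, so $g_1(X_t)=g_2(X_t)=g^\star(X_t)$ a.s.

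The delicate point is that altering $g$ can also alter the induced state $T_g(x)$, since its first coordinate is $g(x)$. I would handle this by showing $x\neq z$. If $x=z$, then $z=x\in M$, contradicting $z\notin M$. Hence $g_2(x)=g^\star(x)$, so $T_{g_2}(x)=T_{g^\star}(x)=z$. Then
\[
  \Phi_{g_2}(x)=g_2(z)=g^\star(z)+c\neq g^\star(z)=\Phi_{g_1}(x).
\]
The final sentence of the theorem follows from the risk identity: $g_1$ and $g_2$ have identical one-step Bayes risk but different two-step recursive forecasts.

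I expect the main obstacle to be care rather than difficulty. One issue is that $g^\star$ is defined only up to null sets, so I would fix a version before composing. The other is the measurability and nullity of $\{z\}$, which closedness of the support settles.
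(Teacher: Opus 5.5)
Your proof is correct and uses the same construction as the paper. You set $g_1=g^\star$, alter $g^\star$ only on an $\mathcal{L}(X_t)$-null set containing $z=T_{g^\star}(x)$, observe $g_2(x)=g^\star(x)$ so that $T_{g_2}(x)=z$, and conclude $\Phi_{g_1}(x)\neq\Phi_{g_2}(x)$. The differences are cosmetic: the paper modifies $g^\star$ on a whole null open neighborhood $U\ni z$, which comes directly from the definition of support, rather than at the single point $z$, and it proves the risk decomposition as a separate preliminary lemma.
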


Theorem~\ref{thm:one-step-optimality-recursive-rollout} should not be read as
the well-known observation that one-step optimality need not imply multi-step
optimality from \cite{taieb2015bias}. The sharper point is that, among recursive predictors with identical one-step Bayes risk, recursive
rollout need not be identified once deployment queries self-generated states
outside the observed support. A first-order two-step expansion makes this
distinction explicit. If
$\delta(x):=g_1(x)-g_2(x)$ and $g_1,g_2$ are $C^1$ near $x$ and $T_{g_2}(x)$,
then Appendix~\ref{app:two-step-divergence} shows
\begin{align}
  \Phi_{g_1}(x)-\Phi_{g_2}(x)
  &=
  \bigl(g_1-g_2\bigr)\!\bigl(T_{g_2}(x)\bigr)
  \notag\\
  &\quad+
  \partial_1 g_1\!\bigl(T_{g_2}(x)\bigr)\,\delta(x)
  +
  o\!\bigl(|\delta(x)|\bigr).
  \label{eq:two-step-divergence}
\end{align}
The first term is disagreement at the induced state queried by rollout; this is
the underidentification mechanism of
Theorem~\ref{thm:one-step-optimality-recursive-rollout}. The second is a Jacobian of the recursive composition map, similar to recent work but on induced states rather than variance bias decompositions \cite{green2025epistemic}. Thus our novel angle is that recursive forecasting failure can arise not only
from compounding dynamics, but also from underidentified local behavior on
self-generated states.

\paragraph{Induced states, corrective targets, and provenance.}
Fix a rollout depth $h\ge 1$, and let
\[
  Z_h=\psi_h(X_t), \qquad P_h=\pi_h(X_t)
\]
be measurable deterministic functions of $X_t$, where $Z_h$ is the induced
numeric state and $P_h$ is provenance information describing how that state was
formed. For the local corrective target $Y_{t+h+1}$, define the Bayes risks
\begin{align*}
  R_h^\star(Z_h)
  &:=
  \inf_q \E\!\big[(Y_{t+h+1}-q(Z_h))^2\big], \\
  R_h^{\mathrm{prov},\star}(Z_h,P_h)
  &:=
  \inf_r \E\!\big[(Y_{t+h+1}-r(Z_h,P_h))^2\big].
\end{align*}

\begin{theorem}[Provenance can strictly reduce Bayes risk on induced states]
\label{thm:information-sandwich-induced-states-provenance}
Under squared loss,
\begin{align}
\label{eq:prov-gap}
  R_h^\star(Z_h)-R_h^{\mathrm{prov},\star}(Z_h,P_h)
  &= \\
  \E\!\left[
    \Var\!\Big(
      \E[Y_{t+h+1}\mid Z_h,P_h]
      \,\Big|\,
      Z_h
    \Big)
  \right]
  \notag
  &\ge 0.
\end{align}
Hence conditioning on provenance cannot increase Bayes risk, and the inequality
is strict exactly when
\[
  \E[Y_{t+h+1}\mid Z_h,P_h]
  \neq
  \E[Y_{t+h+1}\mid Z_h]
\]
on a set of positive probability. Provenance cannot create information beyond
what was already present in the original observed context, but it can recover
information lost in the map $X_t\mapsto Z_h$. A full derivation is given in
Appendix~\ref{app:ext_theory}.
\end{theorem}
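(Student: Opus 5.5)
The plan is to identify both Bayes risks with expected conditional variances and then apply the law of total variance. Write $Y:=Y_{t+h+1}$, $Z:=Z_h$, $P:=P_h$, and assume $\E[Y^2]<\infty$ so that every quantity below is finite.

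\textbf{Step 1: Bayes risks as conditional variances.} The first step uses the standard $L^2$ projection characterization of conditional expectation. For any measurable $q$, the decomposition
\[
  \E[(Y-q(Z))^2]=\E[(Y-\E[Y\mid Z])^2]+\E[(\E[Y\mid Z]-q(Z))^2]
\]
holds because the cross term vanishes by the tower property. Hence the infimum in $R_h^\star$ is attained at $q^\star(Z)=\E[Y\mid Z]$, which gives $R_h^\star=\E[\Var(Y\mid Z)]$. The same argument with the $\sigma$-algebra $\sigma(Z,P)$ gives $R_h^{\mathrm{prov},\star}=\E[\Var(Y\mid Z,P)]$.

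\textbf{Step 2: the gap formula.} Let $m(Z,P):=\E[Y\mid Z,P]$. Conditionally on $Z$, the law of total variance gives
\[
  \Var(Y\mid Z)=\E[\Var(Y\mid Z,P)\mid Z]+\Var(m(Z,P)\mid Z).
\]
This is valid because $\sigma(Z)\subseteq\sigma(Z,P)$, so the tower property applies. Taking expectations and subtracting the identity from Step 1 yields \eqref{eq:prov-gap}. The inequality $\ge 0$ then follows because variances are nonnegative.

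\textbf{Step 3: the strictness characterization.} The gap is the expectation of a nonnegative random variable. It is therefore zero if and only if $\Var(m(Z,P)\mid Z)=0$ almost surely. That in turn holds if and only if $m(Z,P)=\E[m(Z,P)\mid Z]$ almost surely. By the tower property, $\E[m(Z,P)\mid Z]=\E[Y\mid Z]$. So the gap is strictly positive exactly when $\E[Y\mid Z,P]\neq\E[Y\mid Z]$ on a set of positive probability, as claimed.

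\textbf{Step 4: the interpretive remark.} Both $Z$ and $P$ are deterministic functions of $X_t$, so $\sigma(Z,P)\subseteq\sigma(X_t)$. Applying the same projection argument once more gives $R_h^{\mathrm{prov},\star}\ge \inf_s\E[(Y-s(X_t))^2]$. This is the precise sense in which provenance cannot add information beyond $X_t$: it can only recover information discarded by the map $\psi_h$.

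I do not expect a real obstacle. The only care needed is in Step 3, to state the almost-sure qualifiers precisely and to make the square-integrability assumption on $Y$ explicit, since the theorem statement leaves it implicit.
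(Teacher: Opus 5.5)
Your proposal is correct and follows the paper's own route. Each Bayes risk is identified with an expected conditional variance, and the law of total variance is applied conditional on $Z_h$, using $\sigma(Z_h)\subseteq\sigma(Z_h,P_h)\subseteq\sigma(X_t)$. Your Step 3 justifies the strictness claim more carefully than the paper (via $\E[\E[Y\mid Z_h,P_h]\mid Z_h]=\E[Y\mid Z_h]$), and for the comparison with $X_t$ you use nesting of function classes where the paper applies total variance a second time to get the explicit gap $\E[\Var(\E[Y\mid X_t]\mid Z_h,P_h)]$.
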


Figure~\ref{fig:toy_prov} gives a minimal illustration of
Eq.~\eqref{eq:prov-gap}; full details are in
Appendix~\ref{app:toy-provenance}. In the toy delay system, the same numeric
state $(0,1)$ appears both as an \emph{observed} state with target $-1$ and as
a \emph{Bayes-induced rollout state} with local corrective target $+1$. Thus
corrective training on induced states can clash with supervision from the
observed dataset at the same numeric input, a failure mode that standard
rollout-mixing methods do not distinguish \cite{bengio2015scheduled}. A binary
observed/generated provenance tag resolves this clash by separating the two
regimes.

\begin{figure}[t]
    \centering
    \includegraphics[width=\linewidth]{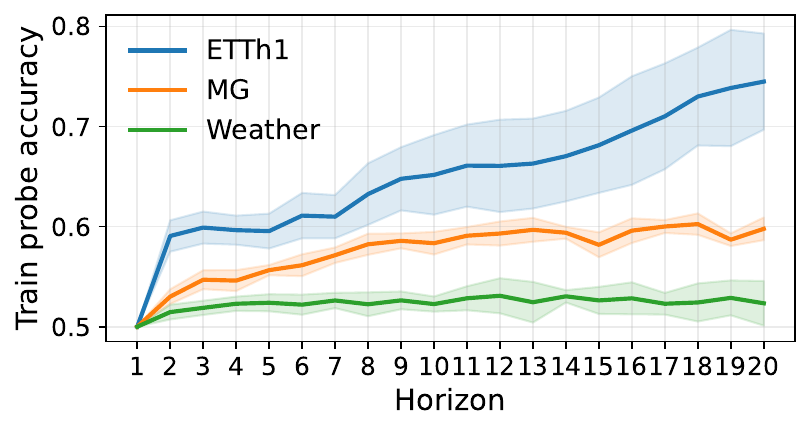}
    \caption{
\textbf{Rollout regime check.}
Linear-probe accuracy for distinguishing observed contexts $X_t$ from teacher-forced induced states $Z_h=\psi_h^{g_{\mathrm{TF}}}(X_t)$ across rollout depth $h$ (mean $\pm$ standard deviation). Accuracy generally increases with depth, most strongly on ETTh1, moderately on MG, and weakly on Weather, suggesting that rollout progressively leaves the observed-state regime.
}
    \label{fig:regime_check}
\end{figure}

\begin{figure*}[t]
    \centering

    \begin{subfigure}[t]{0.32\textwidth}
        \centering
        \includegraphics[width=\linewidth]{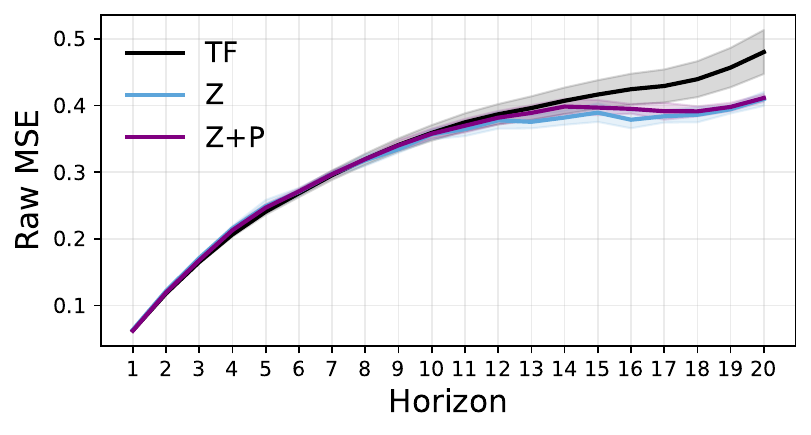}
        \caption{MG}
    \end{subfigure}\hfill
    \begin{subfigure}[t]{0.32\textwidth}
        \centering
        \includegraphics[width=\linewidth]{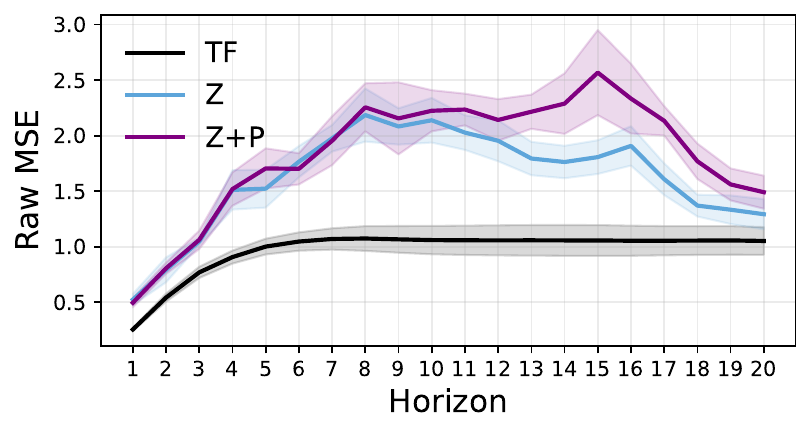}
        \caption{ETTh1}
    \end{subfigure}\hfill
    \begin{subfigure}[t]{0.32\textwidth}
        \centering
        \includegraphics[width=\linewidth]{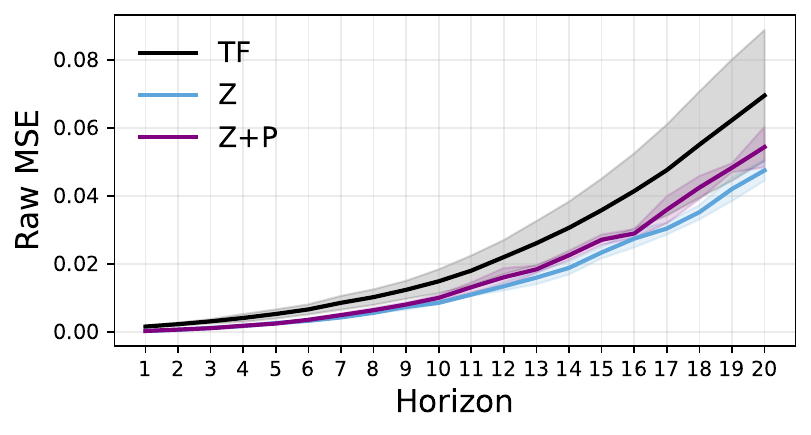}
        \caption{Weather}
    \end{subfigure}

    \caption{
\textbf{Frozen induced-state evaluation.}
For each horizon $h$, we freeze teacher-forced induced states $Z_h=\psi_h^{g_{\mathrm{TF}}}(X_t)$ and compare the original teacher-forced predictor (TF), a probe on $Z_h$ alone (Z), and a provenance-aware probe on $(Z_h,P_h)$ (Z+P). Relearning on fixed induced states is heterogeneous: Z and Z+P often match or outperform TF on MG and Weather, but underperform strongly on ETTh1. Z+P remains close to Z, suggesting limited extra gains from the provenance encoding.
}
    \label{fig:induced_decomposition}
\end{figure*}

\paragraph{Teacher-forced rollout as a distinct prediction object.}
Let $g_{\mathrm{TF}}$ denote the predictor learned by one-step empirical risk
minimization on observed pairs $(X_t,Y_{t+1})$, and define the depth-$h$
teacher-forced induced state
\[
  Z_h=\psi_h^{g_{\mathrm{TF}}}(X_t).
\]
Recursive deployment then presents not the direct horizon-$h$ task
$X_t\mapsto Y_{t+h}$, but the local next-step correction problem
\[
  (Z_h,P_h)\mapsto Y_{t+h+1}.
\]
% Furthermore, this local optimality can directly conflict with rollout MSE under uncertainty \cite{green2026expectations}. 
Furthermore, under conditional uncertainty, this local target optimality can be structurally incompatible with realistic multi-step rollouts~\cite{green2026expectations}.
This raises three separate questions: whether the teacher-forced predictor is
already adapted to induced states, whether the induced representation is well
matched to the chosen function class, and whether provenance adds
target-relevant information.

Let
\[
  R_h(q):=\E\!\big[(Y_{t+h+1}-q(Z_h))^2\big]
\]
denote local next-step risk on induced states, and let $\mathcal Q$ be a
predictor class on $Z_h$. Then Appendix~\ref{app:ext_theory} shows
{\small
\begin{multline}
\label{eq:decomp}
  R_h^Z(g_{\mathrm{TF}})
  -
  R_h^{\mathrm{prov},\star}(Z_h,P_h)
  =
  \underbrace{
    \Big(
      R_h^Z(g_{\mathrm{TF}})
      -
      \inf_{q\in\mathcal Q} R_h^Z(q)
    \Big)
  }_{\text{teacher-forcing/rollout mismatch}} \\
  +
  \underbrace{
    \Big(
      \inf_{q\in\mathcal Q} R_h^Z(q)
      -
      R_h^\star(Z_h)
    \Big)
  }_{\text{representation--class gap}}
  +
  \underbrace{
    \Big(
      R_h^\star(Z_h)
      -
      R_h^{\mathrm{prov},\star}(Z_h,P_h)
    \Big)
  }_{\text{provenance gap}}.
\end{multline}
}
All three terms are nonnegative. The first measures how poorly the
teacher-forced predictor transfers to the induced-state task, the second is a
function-class approximation gap on the induced representation, and the third
is the recoverable information loss from omitting provenance.

Taken together, Theorem~\ref{thm:one-step-optimality-recursive-rollout},
Eq.~\eqref{eq:two-step-divergence}, and
Theorem~\ref{thm:information-sandwich-induced-states-provenance} recast
exposure bias as more than train--test mismatch. Under insufficient
representation, recursive deployment is a problem of prediction on
self-generated states whose local meaning may be underidentified by numeric
state alone.

\section{Empirical Findings}
\label{sec:empirical_findings}

The theory makes four empirical predictions. First, if the identification
failure of Theorem~\ref{thm:one-step-optimality-recursive-rollout} is relevant
in practice, rollout should enter a state regime increasingly distinct from the
observed training distribution. Second, fixed induced states should define a
distinct local corrective task rather than merely the original one-step task on
shifted inputs. Third, if provenance carries target-relevant information beyond
numeric induced state, it should sometimes improve local or deployable
correction, though not necessarily under every encoding. Fourth, if recursive
correction helps by changing future states as well as by improving local
prediction, frozen-state and closed-loop evaluations should diverge.

We test these predictions on three datasets. Details of the implementation are
given in Appendix~\ref{app:experimental_setup}. For clarity, the main text
reports MLP results; Appendix~\ref{app:gru_robustness} shows that GRU
experiments exhibit the same qualitative patterns.

\subsection{Rollout reaches a distinct induced-state regime}
\label{subsec:rollout_regime}

Theorem~\ref{thm:one-step-optimality-recursive-rollout} becomes practically
relevant only if recursive deployment queries states unlike those seen during
one-step training. To test this, we train a linear probe to distinguish
observed contexts $X_t$ from teacher-forced induced states $Z_h$ at rollout
depth $h$.

Probe accuracy rises with depth on all datasets, with the strongest separation
on ETTh1, moderate separation on MG, and only weak separation on Weather
(Figure~\ref{fig:regime_check}). Thus rollout progressively moves the predictor
into a regime distinct from the observed training contexts. This is a
diagnostic rather than a proof of underidentification, but it establishes the
practical precondition for the theoretical problem: recursive deployment is not
confined to the one-step training regime.

\subsection{Fixed induced states define a distinct local corrective task}
\label{subsec:frozen_induced_states}

Equation~\eqref{eq:decomp} suggests separating two aspects of recursive
correction: adaptation to a fixed induced-state task, and changes in the
induced states visited during rollout. We first isolate the former. For each
horizon $h$, we freeze teacher-forced induced states $Z_h$ and compare the
original teacher-forced predictor $g_{\mathrm{TF}}$ with probes trained on
$Z_h$ alone and on $(Z_h,P_h)$ for predicting $Y_{t+h+1}$
(Figure~\ref{fig:induced_decomposition}). Because the induced-state
distribution is held fixed, differences here reflect only adaptation to the
local corrective task.

The resulting pattern is strongly dataset-dependent. On MG and Weather, probes
trained directly on induced states often match or outperform TF, indicating
that part of the teacher-forcing/rollout gap is recoverable by relearning on
the frozen task. On ETTh1, by contrast, both induced-state probes substantially
underperform TF. This is important for the theory: if exposure bias were only a
matter of fitting a better local regressor on self-generated inputs, one would
expect frozen-state relearning to help much more uniformly. Instead, the
results show that induced states can define a genuinely different local
prediction problem whose difficulty depends on the induced representation, the
target, the estimator class, and the dataset.

Across datasets, the provenance-aware probe remains close to the $Z_h$-only
probe. This does not contradict
Theorem~\ref{thm:information-sandwich-induced-states-provenance}, which is a
Bayes-level statement. Rather, it shows that under the present binary
provenance encoding and restricted probe class, only a limited portion of any
theoretical provenance gap is recovered in the frozen setting. Thus the frozen
results support two claims at once: induced-state correction is not the same
task as one-step prediction, and the usefulness of provenance is conditional on
whether the chosen encoding exposes target-relevant structure.

\begin{table}[t]
\centering
\setlength{\tabcolsep}{4pt}
\caption{
\textbf{Deployable rollout performance relative to teacher forcing.}
Rollout MSE of scheduled sampling (SS) and provenance-aware scheduled sampling (SSP), each normalized by the teacher-forced (TF) rollout MSE aggregated over datasets into uniformly split horizon buckets.
Values below $1$ indicate an improvement over TF.
}
\label{tab:rollout_bucket_rel}
\begin{tabular}{llcc}
\toprule
Bucket & Dataset & SS/TF & SSP/TF \\
\midrule
\multirow{3}{*}{Early}
  & ETTh1   & $1.040 \pm 0.006$ & $\mathbf{0.861 \pm 0.080}$ \\
  & MG      & $\mathbf{0.972 \pm 0.006}$ & $1.036 \pm 0.017$ \\
  & Weather & $1.002 \pm 0.062$ & $1.580 \pm 0.541$ \\
\midrule
\multirow{3}{*}{Mid}
  & ETTh1   & $1.071 \pm 0.011$ & $\mathbf{0.905 \pm 0.122}$ \\
  & MG      & $\mathbf{0.925 \pm 0.016}$ & $\mathbf{0.979 \pm 0.032}$ \\
  & Weather & $\mathbf{0.990 \pm 0.091}$ & $\mathbf{0.979 \pm 0.327}$ \\
\midrule
\multirow{3}{*}{Late}
  & ETTh1   & $1.059 \pm 0.013$ & $\mathbf{0.957 \pm 0.130}$ \\
  & MG      & $\mathbf{0.887 \pm 0.025}$ & $\mathbf{0.870 \pm 0.049}$ \\
  & Weather & $\mathbf{0.981 \pm 0.110}$ & $\mathbf{0.999 \pm 0.312}$ \\
\bottomrule
\end{tabular}
\end{table}

\subsection{Closed-loop correction helps partly by changing visited states}
\label{subsec:ss_bridge}

Frozen-state evaluation isolates local adaptation, but recursive deployment also
depends on which induced states are visited later in rollout. If correction
helped only by improving a fixed local prediction task, frozen-state gains
would track deployable rollout gains. When they do not, part of the benefit
must come from changing the induced-state regime itself.

Table~\ref{tab:rollout_bucket_rel} shows that closed-loop correction can
improve deployable recursive forecasting. The clearest pattern is on ETTh1,
where SSP improves over TF across all horizon buckets and SS does not. On MG,
both SS and SSP improve over TF in the mid and late buckets, while Weather is
more mixed and SSP has higher variance. These results are consistent with the
theoretical picture that recursive correction can matter not only through local
re-fitting on induced states, but also through the states it causes the model
to encounter thereafter.

Figure~\ref{fig:ss_bridge_cross_state} helps explain this bridge result.
Unlike Table~\ref{tab:rollout_bucket_rel}, which reports deployable rollout
error, Figure~\ref{fig:ss_bridge_cross_state} fixes the induced-state source
and measures raw next-step MSE across rollout depth. At deeper depths,
SSP-induced states often yield lower next-step error not only for SSP itself,
but also for TF and SS. Thus the gain is not only that SSP is a better local
predictor on a fixed task; SSP also tends to generate states that are locally
easier for multiple predictors. This is the empirical signature of the theory:
recursive correction changes not only the predictor applied to induced states,
but the induced-state regime itself.

Taken together, the mismatch between frozen-state and closed-loop results
supports the view that recursive deployment is not simply a fixed supervised
problem on shifted inputs. It is a feedback system in which correction can
improve both local prediction and the future induced states.

\paragraph{Provenance is suggestive, but not isolated.}
\label{subsec:prov_interpretation}

The toy example in Figure~\ref{fig:toy_prov} shows that provenance can in
principle resolve clashes between observed and induced corrective targets at
the same numeric state. Our real-data experiments do not isolate that clash
mechanism directly, but they are consistent with it. In particular, SSP
sometimes improves where provenance-free correction does not fully account for
the gain, most clearly on ETTh1. At the same time, the mixed frozen-state Z+P
results and the higher variance of SSP show that the present binary provenance
encoding is weak and not uniformly effective. We therefore interpret our simple provenance mask
empirically as a conditional source of useful information, not as a universal
remedy.

\begin{figure*}[t]
    \centering

    \begin{subfigure}[t]{0.32\textwidth}
        \centering
        \includegraphics[width=\linewidth]{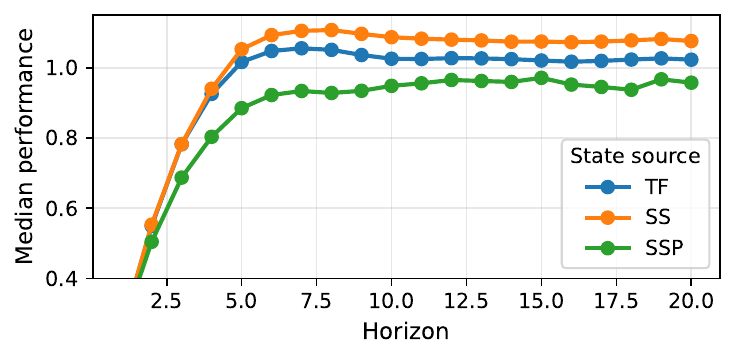}
        \caption{TF Model}
    \end{subfigure}\hfill
    \begin{subfigure}[t]{0.32\textwidth}
        \centering
        \includegraphics[width=\linewidth]{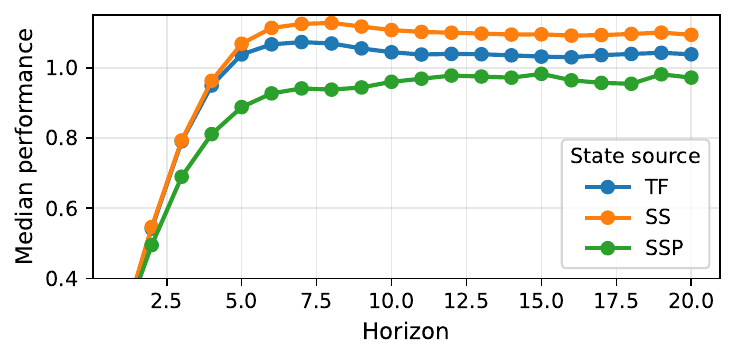}
        \caption{SS Model}
    \end{subfigure}\hfill
    \begin{subfigure}[t]{0.32\textwidth}
        \centering
        \includegraphics[width=\linewidth]{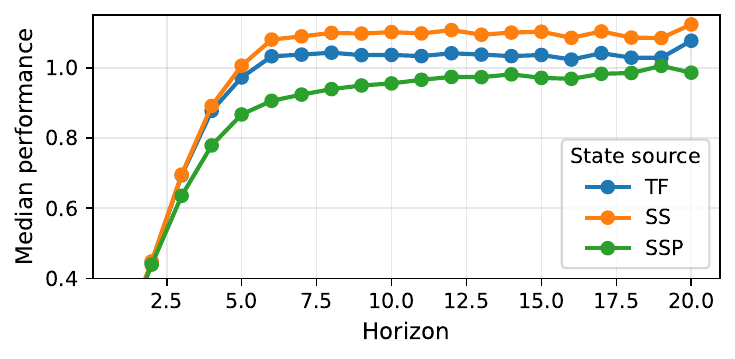}
        \caption{SSP Model}
    \end{subfigure}

\caption{
\textbf{ETTh1 raw cross-state next-step MSE.}
For each evaluated predictor (TF, SS, SSP), we measure the median raw next-step MSE across prediction horizon on each other's fixed induced states. Each panel holds the evaluated model fixed and varies only the induced-state source, so this is a frozen-state diagnostic rather than a deployable rollout metric. Unlike the normalized cross-state comparison, the raw MSE makes absolute state difficulty visible: SSP-induced states often yield lower next-step error not only for SSP itself, but also for TF and SS. This suggests that correction training helps not only by improving the local predictor on a fixed induced-state dataset, but also by changing the induced-state regime encountered during recursive deployment. Corresponding results for MG and Weather are given in Appendix~\ref{sec:appendix_cross_state_raw}.
}
    \label{fig:ss_bridge_cross_state}
\end{figure*}

\section{Discussion and Conclusion}
\label{sec:disc}

Our central claim is not merely that one-step optimal recursive forecasting need not coincide with multi-step optimal forecasting. It is that under partial observability or state truncation, the one-step Bayes objective need not identify the deployed recursive predictor itself. One-step supervision constrains behavior on observed contexts, while recursive deployment queries self-generated induced states whose local interpretation need not be determined by numeric state alone. In this sense, exposure bias is not only train--test mismatch, but a problem of epistemic underidentification in the deployed recursive task.

The empirical results support this claim in three steps. First, rollout enters a regime increasingly distinct from the observed training contexts, making the identification problem practically relevant. Second, frozen induced-state evaluation shows that recursive deployment creates a distinct local corrective task: post-hoc relearning is heterogeneous across datasets and is not a uniform improvement over the original teacher-forced predictor. Third, closed-loop correction improves rollout partly by changing the induced states encountered during deployment, not only by fitting a better local predictor on a fixed induced-state dataset. Provenance-aware correction is most suggestive on ETTh1, but remains mixed overall, consistent with the theory that provenance helps only when it carries target-relevant information recoverable under the chosen encoding. The theory also separates two mechanisms that are often conflated in discussions
of exposure bias: disagreement at induced states whose behavior is
underidentified by one-step training, and amplification of earlier
discrepancies through the closed-loop dynamics. Our experiments do not isolate
these mechanisms cleanly, but the contrast between frozen-state and closed-loop
evaluation is consistent with both being relevant in practice.

Viewed through this lens, provenance matters not merely as an auxiliary input, but because it can resolve aliasing in the induced-state task: the same numeric state may correspond to different corrective targets depending on how it was produced. The toy example demonstrates this mechanism exactly. The real-data results suggest that such information can matter in practice, though the present binary encoding recovers it only weakly and inconsistently. For method design, the results suggest that progress may come not only from improving local predictors, but from exploiting information about how rollout states were formed. This perspective shifts the narrative from viewing exposure bias only as covariate shift toward treating recursive deployment as prediction under self-induced epistemic uncertainty.

\paragraph{Future work.} Our provenance signal is deliberately minimal: a binary indicator of which inputs were model-generated. The mixed SS--SSP gains therefore should not be interpreted as a verdict on provenance as such, but on this particular encoding in a finite-data, restricted-model setting. Richer induced-state summaries may be more effective, including rollout depth, uncertainty estimates, latent-state summaries, or structured traces of state construction.

% \newpage
% --- BIBLIOGRAPHY ---
\bibliographystyle{plainnat}
\bibliography{references}

\newpage

\appendix

\section{Expanded Theory Section}
\label{app:ext_theory}
We give brief proofs of the claims in Section~2. Throughout, let
\[
  M := \operatorname{supp}\!\big(\mathcal{L}(X_t)\big)
\]
denote the support of the observed-context distribution. For a measurable
predictor $g$, define the closed-loop transition
\[
  T_g(x_1,\ldots,x_{\hat p})
  :=
  \bigl(g(x_1,\ldots,x_{\hat p}),x_1,\ldots,x_{\hat p-1}\bigr),
\]
and the two-step recursive forecast
\[
  \Phi_g(x) := g\!\bigl(T_g(x)\bigr).
\]

\subsection{One-step Bayes risk under state truncation}

\begin{proof}
For any measurable predictor $g$,
\[
  R_1(g)=\mathbb{E}\big[(Y_{t+1}-g(X_t))^2\big].
\]
Add and subtract $\mathbb{E}[Y_{t+1}\mid X_t]$:
\begin{align*}
  Y_{t+1}-g(X_t)
  &=
  \Big(Y_{t+1}-\mathbb{E}[Y_{t+1}\mid X_t]\Big) \\
  &\quad+
  \Big(\mathbb{E}[Y_{t+1}\mid X_t]-g(X_t)\Big).
\end{align*}
After squaring and taking expectations, the cross term vanishes because
\[
  \mathbb{E}\!\left[
    Y_{t+1}-\mathbb{E}[Y_{t+1}\mid X_t]
    \,\middle|\,
    X_t
  \right]
  = 0.
\]
Hence
\begin{align*}
  R_1(g)
  &=
  \mathbb{E}\!\left[\Var(Y_{t+1}\mid X_t)\right] \\
  &\quad+
  \mathbb{E}\!\left[
    \big(\mathbb{E}[Y_{t+1}\mid X_t]-g(X_t)\big)^2
  \right].
\end{align*}
Therefore the Bayes predictor
\[
  g^\star(x)=\mathbb{E}[Y_{t+1}\mid X_t=x]
\]
minimizes $R_1(g)$ and satisfies
\[
  R_1(g^\star)
  =
  \mathbb{E}\!\left[\Var(Y_{t+1}\mid X_t)\right].
\]
This is strictly positive exactly when $Y_{t+1}$ is not almost surely a
measurable function of $X_t$.
\end{proof}

\subsection{Proof of Theorem 1}

% The theorem is an ``in general'' statement, so it is enough to exhibit one
% sufficient case where one-step optimality fails to determine recursive rollout.
Assume there exists $x\in M$ such that
\[
  z := T_{g^\star}(x) \notin M.
\]

\begin{proof}
Since $z\notin \operatorname{supp}(\mathcal{L}(X_t))$, there exists an open
set $U\ni z$ such that $\mathbb{P}(X_t\in U)=0$. In particular, $U\cap M=\varnothing$.

Define
\[
  g_1 := g^\star,
  \qquad
  g_2(u)
  :=
  \begin{cases}
    g^\star(u), & u\notin U,\\
    c, & u\in U,
  \end{cases}
\]
for any constant $c\neq g^\star(z)$. Then $g_1$ and $g_2$ agree on $M$, hence
\[
  g_1(X_t)=g_2(X_t)=g^\star(X_t)
  \qquad\text{a.s.}
\]
So they achieve the same one-step risk:
\[
  R_1(g_1)=R_1(g_2)=R_1(g^\star).
\]

Now evaluate them recursively from the initial state $x$. Since
$g_1(x)=g_2(x)=g^\star(x)$,
\[
  T_{g_1}(x)=T_{g_2}(x)=T_{g^\star}(x)=z.
\]
Therefore
\[
  \Phi_{g_1}(x)=g_1(z)=g^\star(z),
  \qquad
  \Phi_{g_2}(x)=g_2(z)=c.
\]
Because $c\neq g^\star(z)$, we have
\[
  \Phi_{g_1}(x)\neq \Phi_{g_2}(x).
\]
Thus one-step Bayes optimality does not determine recursive rollout outside the
support of observed contexts.
\end{proof}

\subsection{Additional two-step expansion: off-support mismatch and amplification}
\label{app:two-step-divergence}

The following first-order expansion is not needed for the main theorem
statements, but it helps interpret more precisely how one-step agreement can
still yield different recursive forecasts. In particular, it gives a local
mechanistic refinement of Theorem~\ref{thm:one-step-optimality-recursive-rollout}
by separating disagreement at the induced state from amplification of an earlier
first-step discrepancy.

\begin{proposition}[Two-step divergence separates two failure modes]
\label{prop:two-step-divergence-failure-modes}
Let $\Phi_g(x)$ denote the two-step recursive forecast induced by predictor
$g$. For two predictors $g_1,g_2$, define
\[
  \delta(x):=g_1(x)-g_2(x).
\]
Assume $g_1$ and $g_2$ are $C^1$ in a neighborhood of $x$ and
$T_{g_2}(x)$. Then
\begin{align*}
  \Phi_{g_1}(x)-\Phi_{g_2}(x)
  &=
  \bigl(g_1-g_2\bigr)\!\bigl(T_{g_2}(x)\bigr) \\
  &\quad+
  \partial_1 g_1\!\bigl(T_{g_2}(x)\bigr)\,\delta(x) \\
  &\quad+
  o\!\bigl(|\delta(x)|\bigr).
\end{align*}
\end{proposition}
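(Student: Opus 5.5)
The plan is to split the two-step difference by inserting the intermediate quantity $g_1(T_{g_2}(x))$. Write $z_i := T_{g_i}(x)$. Then
\[
  \Phi_{g_1}(x)-\Phi_{g_2}(x)
  =
  \bigl[g_1(z_1)-g_1(z_2)\bigr]
  +
  \bigl[g_1(z_2)-g_2(z_2)\bigr].
\]
The second bracket is exactly $(g_1-g_2)(T_{g_2}(x))$. This is the disagreement-at-the-induced-state term, and it needs no approximation. All the remaining work is in the first bracket, which compares the same predictor $g_1$ at two nearby induced states.

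For the first bracket I would use the shift structure of $T_g$. By definition,
\[
  z_1 = \bigl(g_1(x),x_1,\ldots,x_{\hat p-1}\bigr),
  \qquad
  z_2 = \bigl(g_2(x),x_1,\ldots,x_{\hat p-1}\bigr).
\]
So $z_1-z_2=\delta(x)\,e_1$, and the two induced states differ only in the first coordinate. The variation is therefore one-dimensional. Since $g_1$ is $C^1$ on a neighbourhood of $z_2=T_{g_2}(x)$, I would apply the mean value theorem or a first-order Taylor expansion in the first coordinate only:
\[
  g_1(z_2+\delta e_1)-g_1(z_2)
  =
  \partial_1 g_1(z_2)\,\delta + o(|\delta|).
\]
Substituting this back gives the stated expansion.

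The main obstacle is making the little-$o$ statement meaningful, since $x$ is fixed and $\delta(x)$ is a single number rather than a variable tending to zero. I would make the asymptotic regime explicit: the remainder is $r(\delta)=g_1(z_2+\delta e_1)-g_1(z_2)-\partial_1 g_1(z_2)\delta$, and $r(\delta)/\delta\to0$ as $\delta\to0$. To handle this, I would read the claim as holding uniformly along any family of predictor pairs, for example $g_1=g_2+\varepsilon h$, for which $\delta(x)\to0$. The mean value form also gives the remainder exactly as $\bigl(\partial_1 g_1(\xi)-\partial_1 g_1(z_2)\bigr)\delta$, with $\xi$ on the segment from $z_2$ to $z_1$. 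Continuity of $\partial_1 g_1$ near $z_2$ then shows this is $o(|\delta|)$, provided $|\delta|$ is small enough that $z_1$ lies in the $C^1$ neighbourhood.

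The smoothness near $x$ is used only to ensure $\delta$ varies continuously. The core requirement is $C^1$ regularity of $g_1$ near $T_{g_2}(x)$.
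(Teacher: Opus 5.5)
Your proposal is correct and follows the paper's proof essentially step for step: add and subtract $g_1(T_{g_2}(x))$, use the shift structure $T_{g_1}(x)-T_{g_2}(x)=(\delta(x),0,\ldots,0)$, and Taylor-expand $g_1$ in its first coordinate at $T_{g_2}(x)$. Your treatment of the $o(|\delta(x)|)$ remainder is a useful addition that the paper leaves implicit: you read it along a family with $\delta(x)\to 0$ and control it through the mean-value form and continuity of $\partial_1 g_1$.
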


\begin{proof}
By definition,
\[
  \Phi_{g_1}(x)-\Phi_{g_2}(x)
  =
  g_1\!\bigl(T_{g_1}(x)\bigr)-g_2\!\bigl(T_{g_2}(x)\bigr).
\]
Add and subtract $g_1\!\bigl(T_{g_2}(x)\bigr)$:
\begin{align*}
  \Phi_{g_1}(x)-\Phi_{g_2}(x)
  &=
  \Big(
    g_1\!\bigl(T_{g_1}(x)\bigr)-g_1\!\bigl(T_{g_2}(x)\bigr)
  \Big) \\
  &\quad+
  \bigl(g_1-g_2\bigr)\!\bigl(T_{g_2}(x)\bigr).
\end{align*}
Also,
\[
  T_{g_1}(x)-T_{g_2}(x)
  =
  \bigl(\delta(x),0,\ldots,0\bigr).
\]
A first-order Taylor expansion of $g_1$ at $T_{g_2}(x)$ gives
\[
  g_1\!\bigl(T_{g_1}(x)\bigr)
  =
  g_1\!\bigl(T_{g_2}(x)\bigr)
  +
  \partial_1 g_1\!\bigl(T_{g_2}(x)\bigr)\,\delta(x)
  +
  o\!\bigl(|\delta(x)|\bigr).
\]
Substituting yields the claim.
\end{proof}

\paragraph{Interpretation.}
Proposition~\ref{prop:two-step-divergence-failure-modes} separates two local
mechanisms by which recursive forecasts can diverge. The term
\[
  \bigl(g_1-g_2\bigr)\!\bigl(T_{g_2}(x)\bigr)
\]
captures disagreement at the induced state queried by rollout. This includes
off-support states whose local behavior is not identified by one-step
supervision, and is the epistemic component emphasized in the main text. By
contrast, the term
\[
  \partial_1 g_1\!\bigl(T_{g_2}(x)\bigr)\,\delta(x)
\]
captures Jacobian-mediated amplification of an already-present first-step
discrepancy through the closed-loop dynamics. The proposition should therefore
be read as a local complement to the main information argument: recursive error
can arise both from underidentified behavior on self-generated states and from
dynamical sensitivity to earlier local error.

\subsection{Proof of Theorem 2}

\begin{proof}
For any measurable representation $W$, the Bayes risk under squared loss is
\[
  R^\star(W)
  :=
  \inf_f \mathbb{E}\big[(Y-f(W))^2\big]
  =
  \mathbb{E}\!\left[\Var(Y\mid W)\right].
\]
Now $Z_h=\psi_h(X_t)$ and $P_h=\pi_h(X_t)$ are deterministic measurable
functions of $X_t$, so
\[
  \sigma(Z_h)\subseteq \sigma(Z_h,P_h)\subseteq \sigma(X_t).
\]
Since conditional variance decreases as the conditioning $\sigma$-field becomes
finer,
\[
  \mathbb{E}\!\left[\Var(Y\mid X_t)\right]
  \le
  \mathbb{E}\!\left[\Var(Y\mid Z_h,P_h)\right]
  \le
  \mathbb{E}\!\left[\Var(Y\mid Z_h)\right].
\]
Equivalently,
\[
  R^\star(X_t)
  \le
  R^\star_{\mathrm{prov}}(Z_h,P_h)
  \le
  R^\star(Z_h).
\]

To characterize the right inequality, apply the law of total variance
conditional on $Z_h$:
\begin{align*}
  \Var(Y\mid Z_h)
  &=
  \mathbb{E}\!\left[
    \Var(Y\mid Z_h,P_h)\mid Z_h
  \right] \\
  &\quad+
  \Var\!\left(
    \mathbb{E}[Y\mid Z_h,P_h]\mid Z_h
  \right).
\end{align*}
Taking expectations gives
\begin{align*}
  R^\star(Z_h)-R^\star_{\mathrm{prov}}(Z_h,P_h)
  &=
  \mathbb{E}\!\left[
    \Var\!\left(
      \mathbb{E}[Y\mid Z_h,P_h]\mid Z_h
    \right)
  \right] \\
  &\ge 0.
\end{align*}
Hence the right inequality is strict exactly when
\[
  \mathbb{E}[Y\mid Z_h,P_h]\neq \mathbb{E}[Y\mid Z_h]
\]
on a set of positive probability.

Similarly, because $(Z_h,P_h)$ is coarser than $X_t$,
\begin{align*}
  \Var(Y\mid Z_h,P_h)
  &=
  \mathbb{E}\!\left[
    \Var(Y\mid X_t)\mid Z_h,P_h
  \right] \\
  &\quad+
  \Var\!\left(
    \mathbb{E}[Y\mid X_t]\mid Z_h,P_h
  \right).
\end{align*}
Taking expectations yields
\begin{align*}
  R^\star_{\mathrm{prov}}(Z_h,P_h)-R^\star(X_t)
  &=
  \mathbb{E}\!\left[
    \Var\!\left(
      \mathbb{E}[Y\mid X_t]\mid Z_h,P_h
    \right)
  \right] \\
  &\ge 0.
\end{align*}
So the left inequality is strict whenever $(Z_h,P_h)$ discards
target-relevant information from $X_t$.

Finally,
\begin{align*}
  R^\star(Z_h)-R^\star(X_t)
  &=
  \Big(
    R^\star(Z_h)-R^\star_{\mathrm{prov}}(Z_h,P_h)
  \Big) \\
  &\quad+
  \Big(
    R^\star_{\mathrm{prov}}(Z_h,P_h)-R^\star(X_t)
  \Big),
\end{align*}
which gives the Bayes-risk decomposition underlying the information sandwich.
\end{proof}

\subsection{Derivation of Eq.~\texorpdfstring{\eqref{eq:decomp}}{(1)}}

To keep the display short, write
\[
  R_{h,Z}^\star := R_h^\star(Z_h),
  \qquad
  R_{h,ZP}^\star := R_h^{\mathrm{prov},\star}(Z_h,P_h).
\]
Then Eq.~\eqref{eq:decomp} is obtained by adding and subtracting intermediate terms,
\begin{align}
  R_h(g_{\mathrm{TF}})-R_{h,ZP}^\star
  &=
  \Big(
    R_h(g_{\mathrm{TF}})
    - \inf_{q\in\mathcal{Q}} R_h(q)
  \Big) \notag\\
  &\quad+
  \Big(
    \inf_{q\in\mathcal{Q}} R_h(q)
    - R_{h,Z}^\star
  \Big) \notag\\
  &\quad+
  \Big(
    R_{h,Z}^\star
    - R_{h,ZP}^\star
  \Big).
\end{align}
Each term is nonnegative:
\begin{enumerate}
  \item the best element of $\mathcal{Q}$ cannot be worse than the specific
  map $g_{\mathrm{TF}}$;
  \item Bayes risk is minimal over all measurable functions of $Z_h$;
  \item conditioning on $(Z_h,P_h)$ cannot increase Bayes risk under squared
  loss.
\end{enumerate}
Thus the three terms in Eq.~\eqref{eq:decomp} correspond exactly to
teacher-forcing/rollout mismatch, representation--class approximation, and the
provenance information gap.

% \newpage
\section{Toy example: binary provenance disambiguates an induced state}
\label{app:toy-provenance}

This toy makes the provenance term in
Theorem~\ref{thm:information-sandwich-induced-states-provenance} concrete.  We
use a deterministic lag-3 delay system
\[
  y_{t+1}=y_{t-2},
\]
but the predictor only receives a lag-$2$ window
\[
  X_t=(y_t,y_{t-1}).
\]
The point of the construction is to exhibit a case in which the \emph{same
numeric state} occurs both on the observed support and as a rollout-induced
state, with different correct next-step targets.

We consider three local histories,
\[
  H_1=(1,1,1), \qquad H_2=(1,1,-1), \qquad H_3=(0,1,-1),
\]
where each tuple denotes $(y_t,y_{t-1},y_{t-2})$. Their observed lag-$2$
windows and one-step targets are
\[
  X(H_1)=X(H_2)=(1,1), \qquad X(H_3)=(0,1),
\]
and
\[
  y_{t+1}(H_1)=1,\qquad y_{t+1}(H_2)=-1,\qquad y_{t+1}(H_3)=-1.
\]
Hence the Bayes-optimal one-step predictor on the observed support satisfies
\[
  g^\star(1,1)=0, \qquad g^\star(0,1)=-1.
\]
So the observed state $(1,1)$ is Bayes-ambiguous, while the observed state
$(0,1)$ has target $-1$.

Now roll out one Bayes step from the ambiguous observed state $(1,1)$. Since
$g^\star(1,1)=0$, the lag-$2$ update yields the induced state
\[
  Z_t=(0,1).
\]
Crucially, this is \emph{the same numeric state} as the observed state produced
by $H_3$. But its correct local next-step target is different. For both $H_1$
and $H_2$, after one rollout step the correct next target is
\[
  y_{t+2}=y_{t-1}=1.
\]
Thus the numeric state $(0,1)$ carries two different local targets:
\[
  (0,1)\ \text{observed} \;\mapsto\; -1,
  \qquad
  (0,1)\ \text{induced} \;\mapsto\; +1.
\]

This is exactly the clash shown in Figure~\ref{fig:toy_prov}. A predictor that
uses only the numeric state must conflate these two cases. In particular, a
correction dataset formed from induced states contains the pair
\[
  ((0,1),+1),
\]
while the teacher-forced observed dataset contains
\[
  ((0,1),-1).
\]
Without provenance, these two supervision signals are indistinguishable.

A binary provenance variable resolves the ambiguity. Let
\[
  P_t\in\{0,1\}^2
\]
encode whether each entry of the lag-$2$ input is observed ($0$) or
model-generated ($1$). Then
\[
  P_t=(0,0)
\]
for the observed state $(0,1)$ from $H_3$, whereas after one rollout step from
$(1,1)$ we have
\[
  P_t=(1,0),
\]
because the first coordinate is the Bayes prediction $0$ and the second remains
observed. Therefore
\begin{align}
\E[Y \mid Z_t=(0,1), P_t=(0,0)] &= -1, \\
\E[Y \mid Z_t=(0,1), P_t=(1,0)] &= +1.
\end{align}
By contrast, $\E[Y \mid Z_t=(0,1)]$ necessarily averages over these cases. In this construction, provenance is
informative beyond the numeric induced state, so the right inequality in
Theorem~\ref{thm:information-sandwich-induced-states-provenance} is strict.

The toy should be read as an existence proof, not as a claim that every
provenance encoding is sufficient. Its point is that even a simple binary
observed/generated indicator can become target-relevant when the same numeric
state appears in both observed and induced regimes with different meanings. Crucially, this clash arises even though the induced state is generated by the
Bayes-optimal one-step predictor on the observed task. Thus the ambiguity cannot
be attributed to estimation error on the training distribution; it is a property
of the induced-state prediction problem itself.

% \newpage
\section{Experimental Setup}
\label{app:experimental_setup}
\subsection{Datasets}

Experiments are conducted on three time-series datasets.

\textbf{Mackey--Glass (MG).} A synthetic chaotic time series generated from the Mackey--Glass delay differential equation with parameters $a = 0.1$, $\gamma = 0.2$, delay $\tau = 17$, and process noise standard deviation $\sigma_s = 0.2$ (measurement noise $\sigma_e = 0$). The system is initialized at a constant value of $1.0$ with a burn-in of $100$ steps discarded before recording. This dataset allows controlled study of the induced-state regime under known chaotic dynamics.

\textbf{Real-world benchmarks.} Two datasets from the BasicTS benchmark suite are used: ETTh1 (electricity transformer temperature, hourly) and Weather (meteorological measurements). For these real-world datasets, a single univariate series is extracted (channel index $0$) and standardized to zero mean and unit variance prior to windowing.

\subsection{Data Splits and Windowing}

Each series is split sequentially into a training segment of $4{,}000$ steps, a validation segment of $2{,}000$ steps, and a test segment of $2{,}000$ steps (total $8{,}000$ steps). Sliding-window input--output pairs $(\mathbf{x}_t, \mathbf{y}_t)$ are formed with a context window of size $W = 20$ and a multi-step horizon of $H = 20$, yielding target vectors $\mathbf{y}_t = (y_{t+1}, \ldots, y_{t+H})$.

\begin{figure*}[t]
    \centering

    \begin{subfigure}[t]{0.32\textwidth}
        \centering
        \includegraphics[width=\linewidth]{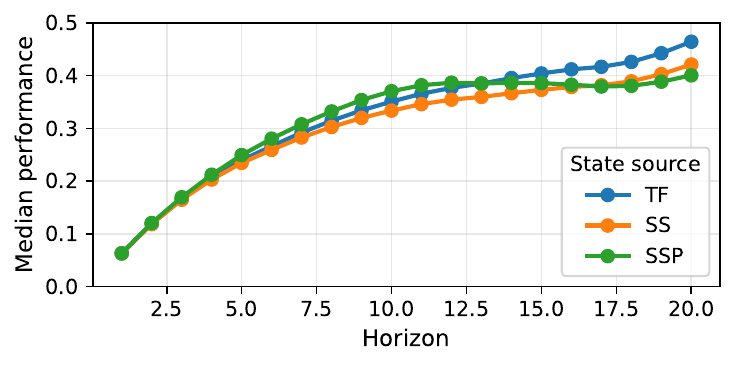}
        \caption{TF model}
    \end{subfigure}\hfill
    \begin{subfigure}[t]{0.32\textwidth}
        \centering
        \includegraphics[width=\linewidth]{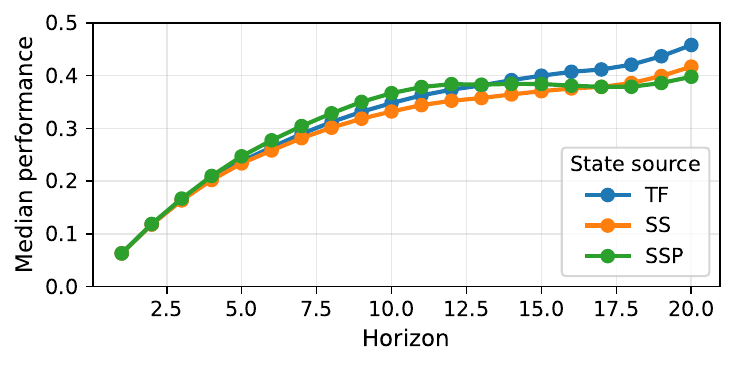}
        \caption{SS model}
    \end{subfigure}\hfill
    \begin{subfigure}[t]{0.32\textwidth}
        \centering
        \includegraphics[width=\linewidth]{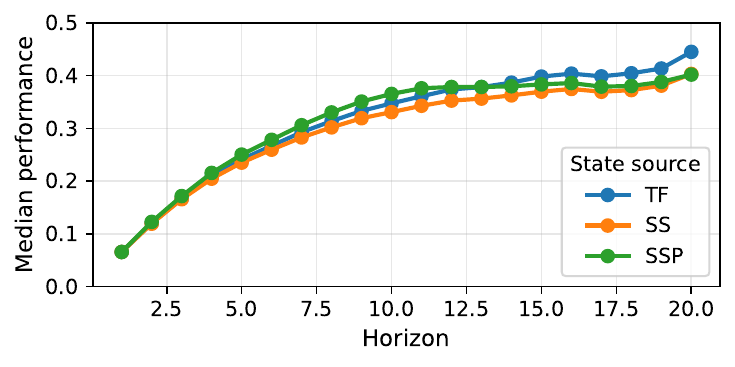}
        \caption{SSP model}
    \end{subfigure}

\caption{
\textbf{MG raw cross-state next-step MSE.}
Median raw next-step MSE across rollout horizon for TF, SS, and SSP evaluated on fixed induced states generated by TF, SS, or SSP. Each panel fixes the evaluated model and varies only the state source.
}
    \label{fig:cross_state_raw_mg}
\end{figure*}

\subsection{Models and Training}

All models were implemented in PyTorch. We evaluated two recursive one-step forecasters: a GRU and an MLP. The GRU used hidden size $128$, $1$ layer, and dropout $0.0$. The MLP used a single hidden layer of width $128$ with \texttt{tanh} activations and dropout $0.0$. In all cases, models output a scalar next-step prediction and were rolled out recursively for multi-step forecasting.

For each dataset, architecture, and seed, we trained three variants: teacher forcing (TF), scheduled sampling without provenance (SS), and scheduled sampling with provenance (SSP). SS and SSP used a linear teacher-forcing schedule from $1.0$ to $0.2$ over training. In SSP, the model additionally received a binary provenance indicator marking which entries in the input window were model-generated rather than observed.

Training used SGD with learning rate $10^{-3}$$, $batch size $128$, no weight decay, and $500$ epochs. Model selection was based on validation rollout MSE, and the checkpoint with the best validation rollout MSE was used for test-time evaluation.

\subsection{Evaluation Protocol}

\begin{figure*}[t]
    \centering

    \begin{subfigure}[t]{0.32\textwidth}
        \centering
        \includegraphics[width=\linewidth]{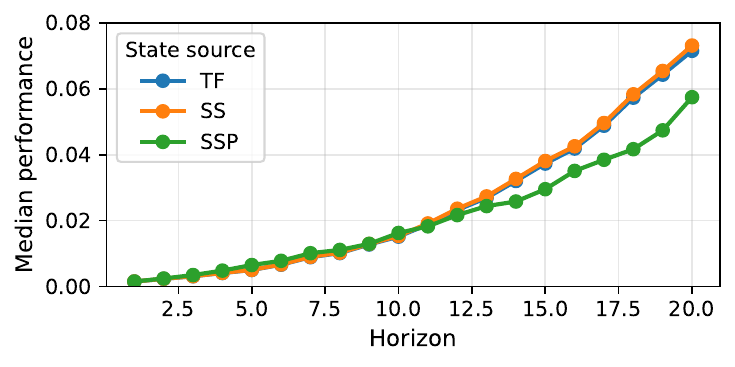}
        \caption{TF model}
    \end{subfigure}\hfill
    \begin{subfigure}[t]{0.32\textwidth}
        \centering
        \includegraphics[width=\linewidth]{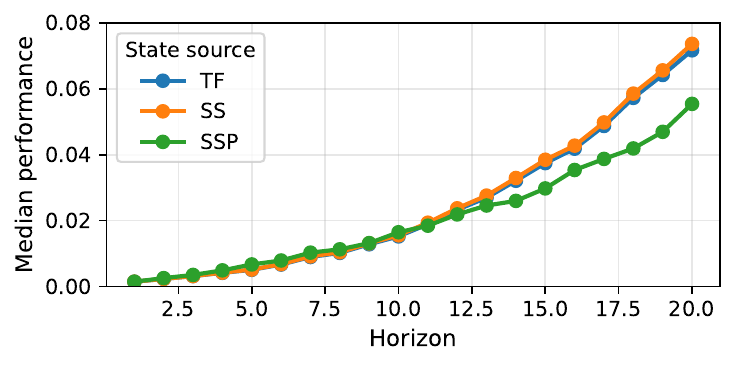}
        \caption{SS model}
    \end{subfigure}\hfill
    \begin{subfigure}[t]{0.32\textwidth}
        \centering
        \includegraphics[width=\linewidth]{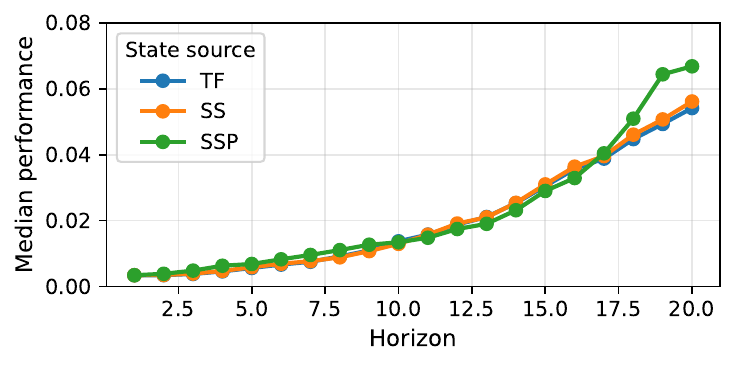}
        \caption{SSP model}
    \end{subfigure}

\caption{
\textbf{Weather raw cross-state next-step MSE.}
Median raw next-step MSE across rollout horizon for TF, SS, and SSP evaluated on fixed induced states generated by TF, SS, or SSP. Each panel fixes the evaluated model and varies only the state source.
}
    \label{fig:cross_state_raw_weather}
\end{figure*}

We report recursive rollout performance on train, validation, and test splits, with primary comparisons made on the test set. For each trained model we saved full rollout predictions and induced-state objects over the forecasting horizon. The appendix figures and tables were then computed from these saved rollouts.

For the rollout-regime diagnostic, we trained a linear logistic-regression probe to distinguish true observed states from induced rollout states at each horizon, using $256$ sampled states per depth, $10$ repeats, and an $80/20$ train/test split. For the frozen induced-state analyzes, probe regressors matched the main forecaster family (GRU probes for GRU experiments, MLP probes for MLP experiments), were trained on the training induced states, and used the same main optimization settings except that provenance-aware probes additionally received the provenance variables with scaling factor $\lambda_P = 1.0$.

\subsection{Runs}

The final experiment grid comprised $3$ datasets (MG, ETTh1, Weather), $2$ model classes (GRU, MLP), and $5$ seeds for randomisation, 0-4, for a total of $30$ runs.

% \FloatBarrier

% \newpage

\section{Cross-state experiments: raw next-step MSE by induced-state source}
\label{sec:appendix_cross_state_raw}

This appendix reports raw cross-state next-step MSE results for the datasets not shown in the main text. As in the ETTh1 figure, we evaluate each predictor on fixed induced states generated by TF, SS, or SSP, and measure next-step MSE across rollout depth. Each panel fixes the evaluated predictor and varies only the induced-state source, so these plots isolate frozen-state behavior rather than deployable rollout.

The raw MSE makes absolute state difficulty visible. Across datasets, SSP-induced states often yield lower next-step error not only for SSP itself, but also for TF and SS, particularly at deeper horizons. On MG, this effect emerges later in rollout, with SS states initially easier but SSP states becoming easier at greater depth. On Weather, SSP states are generally easier throughout and increasingly so with depth. 

These results support the main-text interpretation: correction helps not only by improving local prediction on a fixed induced-state dataset, but also by changing the induced-state regime encountered during recursive deployment.

% \FloatBarrier
% \newpage
% \clearpage
\section{Robustness check with GRU}
\label{app:gru_robustness}

This appendix repeats the main empirical analyzes using a GRU forecaster instead of the MLP used in the main text. The qualitative conclusions are unchanged: recursive rollout still enters a distinct induced-state regime, frozen induced-state relearning remains heterogeneous across datasets, and the mismatch between frozen-state local evaluation and deployable rollout again suggests that correction helps partly by changing the states visited during rollout.

\paragraph{Rollout regime check.}
Figure~\ref{fig:gru_regime_check} is the GRU analogue of Figure~\ref{fig:regime_check} in the main text. As in the MLP case, probe accuracy generally increases with rollout depth, with the strongest separation on ETTh1, moderate separation on MG, and weaker separation on Weather. This again supports the view that recursive deployment progressively leaves the observed-state regime.

\begin{figure}[t]
    \centering
    \includegraphics[width=\linewidth]{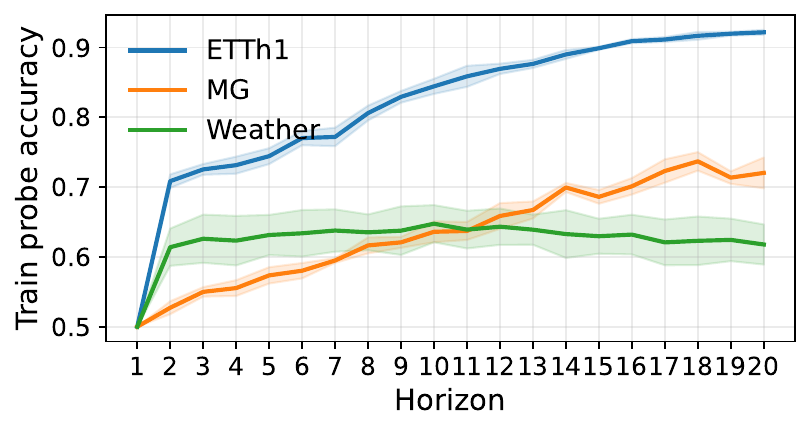}
    \caption{
    \textbf{GRU analogue of Figure~\ref{fig:regime_check}.}
    Same rollout-regime diagnostic as in the main text, but for the GRU forecaster. The qualitative pattern is the same: observed contexts and teacher-forced induced states become increasingly distinguishable with rollout depth, most strongly on ETTh1, moderately on MG, and weakly on Weather.
    }
    \label{fig:gru_regime_check}
\end{figure}

\paragraph{Frozen induced-state evaluation.}
Figure~\ref{fig:gru_induced_decomposition} is the GRU analogue of Figure~\ref{fig:induced_decomposition}. The same overall pattern appears here: relearning on fixed induced states is not uniformly beneficial, gains remain dataset-dependent, and the provenance-aware probe stays close to the $Z$-only probe. As in the main text, this suggests that frozen induced states define a distinct prediction problem, but that the present provenance encoding adds only limited extra signal in this setting.

\begin{figure*}[t]
    \centering

    \begin{subfigure}[t]{0.32\textwidth}
        \centering
        \includegraphics[width=\linewidth]{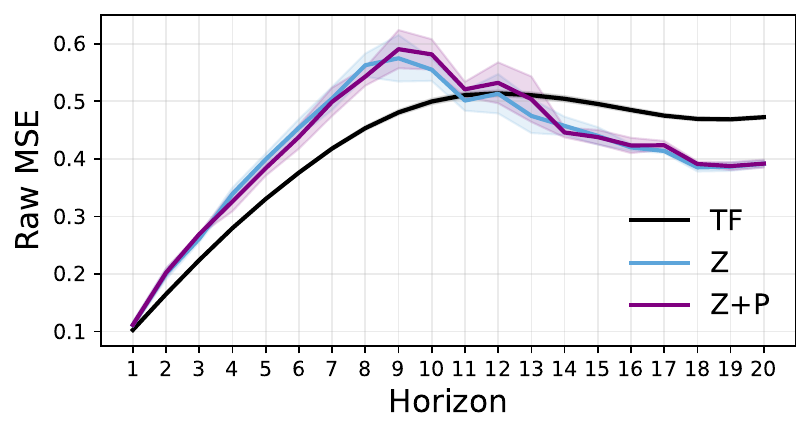}
        \caption{MG}
    \end{subfigure}\hfill
    \begin{subfigure}[t]{0.32\textwidth}
        \centering
        \includegraphics[width=\linewidth]{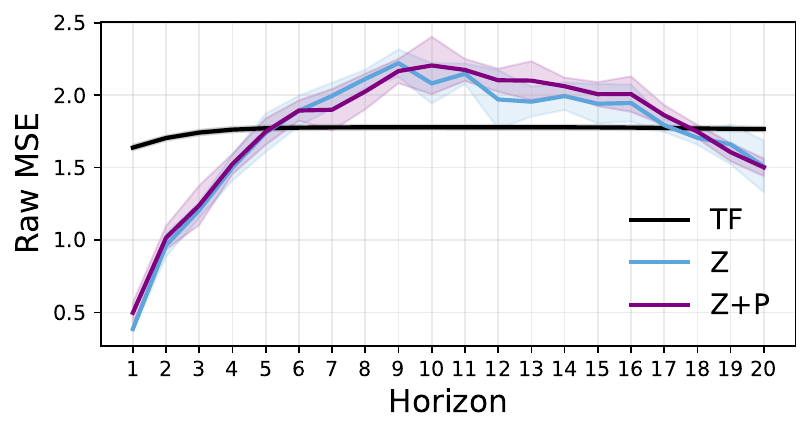}
        \caption{ETTh1}
    \end{subfigure}\hfill
    \begin{subfigure}[t]{0.32\textwidth}
        \centering
        \includegraphics[width=\linewidth]{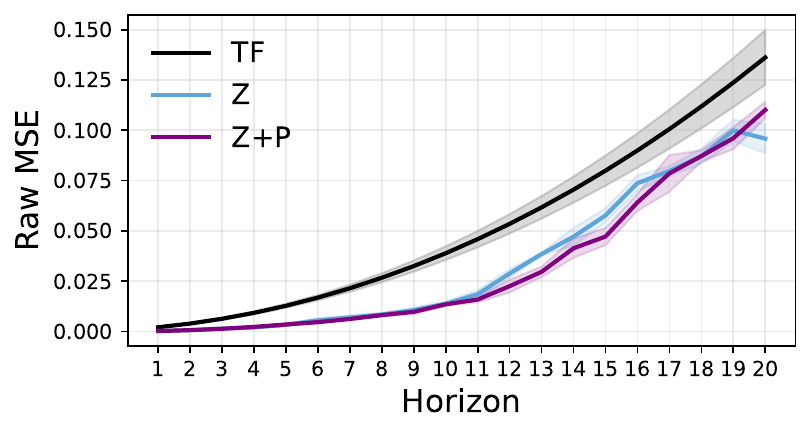}
        \caption{Weather}
    \end{subfigure}

    \caption{
    \textbf{GRU analogue of Figure~\ref{fig:induced_decomposition}.}
    Same frozen induced-state evaluation as in the main text, but for the GRU forecaster. The qualitative conclusions are unchanged: relearning on fixed induced states is heterogeneous across datasets, and provenance-aware probes remain close to $Z$-only probes.
    }
    \label{fig:gru_induced_decomposition}
\end{figure*}

\paragraph{Cross-state local next-step evaluation.}
Figure~\ref{fig:gru_ss_bridge_cross_state} is the GRU analogue of Figure~\ref{fig:ss_bridge_cross_state}. The same bridge pattern appears: frozen-state local comparisons do not fully track deployable rollout gains. This again supports the interpretation that correction training helps not only by improving the local predictor on a fixed induced-state dataset, but also by changing which induced states are encountered during recursive deployment.

\begin{figure*}[t]
    \centering

    \begin{subfigure}[t]{0.32\textwidth}
        \centering
        \includegraphics[width=\linewidth]{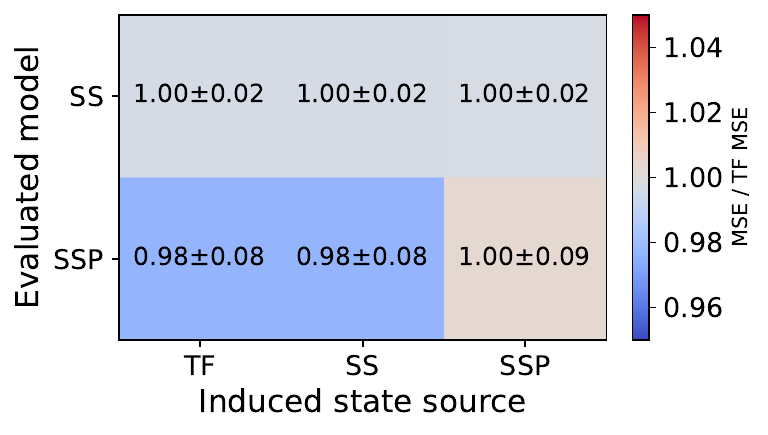}
        \caption{Early horizon}
    \end{subfigure}\hfill
    \begin{subfigure}[t]{0.32\textwidth}
        \centering
        \includegraphics[width=\linewidth]{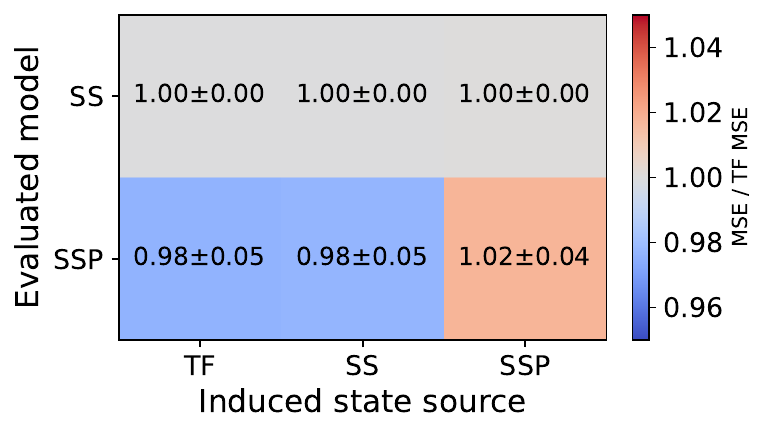}
        \caption{Mid horizon}
    \end{subfigure}\hfill
    \begin{subfigure}[t]{0.32\textwidth}
        \centering
        \includegraphics[width=\linewidth]{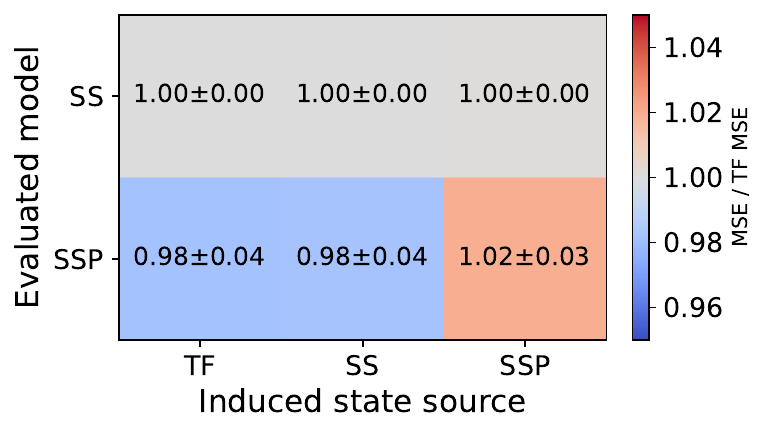}
        \caption{Late horizon}
    \end{subfigure}

    \caption{
    \textbf{GRU analogue of Figure~\ref{fig:ss_bridge_cross_state}.}
    Same cross-state local next-step diagnostic as in the main text, but for the GRU forecaster. The qualitative pattern is again similar: frozen-state local improvements do not fully explain deployable rollout gains, consistent with correction helping partly by changing the induced-state regime itself.
    }
    \label{fig:gru_ss_bridge_cross_state}
\end{figure*}

\end{document}